\newtheorem{theorem}{Theorem}
\newtheorem{assumption}{Assumption}
\newtheorem{lemma}{Lemma}
\newtheorem{myDef}{Definition}
\newtheorem{remark}{Remark}
\title{A Unified Approach for Multi-step Temporal-Difference Learning with Eligibility Traces in Reinforcement Learning}
\author{ID:3116 
Laboratoire d'Analyse et Modélisation des Systèmes pour l'Aide à la Décision (LAMSADE)  \\
pcchair@ijcai-18.org}
\author{
Long Yang, 
Minhao Shi, 
Qian Zheng, 
Wenjia Meng,
Gang Pan
\\ 
Zhejiang University \\
\{yanglong,minhaowill,qianzheng,mengwenjia,gpan\}@zju.edu.cn
}
\begin{document}

\maketitle

\begin{abstract}
Recently, a new multi-step temporal learning algorithm, called $Q(\sigma)$, unifies $n$-step Tree-Backup (when $\sigma=0$) and $n$-step Sarsa (when $\sigma=1$) by introducing a sampling parameter $\sigma$. However, similar to other multi-step temporal-difference learning algorithms, $Q(\sigma)$ needs much memory consumption and computation time. 
Eligibility trace is an important mechanism to transform the off-line updates into efficient on-line ones which consume less memory and computation time.
In this paper, we further develop the original $Q(\sigma)$, combine it with eligibility traces and propose a new algorithm, called $Q(\sigma ,\lambda)$, in which $\lambda$ is trace-decay parameter. This idea unifies Sarsa$(\lambda)$ (when $\sigma =1$) and $Q^{\pi}(\lambda)$  (when $\sigma =0$). Furthermore, we give an upper error bound of $Q(\sigma ,\lambda)$ \emph{policy evaluation} algorithm. We prove that $Q(\sigma,\lambda)$ \emph{control} algorithm can converge to the optimal value function exponentially.
We also empirically compare it with conventional temporal-difference learning methods. Results show that, with an intermediate value of $\sigma$, $Q(\sigma ,\lambda)$ creates a mixture of the existing algorithms that can learn the optimal value significantly faster than the extreme end ($\sigma=0$, or $1$).
\end{abstract}

\section{Introduction}
In reinforcement learning, experiences are sequences of states, actions and rewards that generated by the agent interacts with environment. The agent's goal is learning from experiences and seeking an optimal policy from the delayed reward decision system. There are two fundamental mechanisms have been studied, one is temporal-difference (TD) learning method which is a combination of Monte Carlo method and dynamic programming ~\cite{sutton1988learning}. The other one is eligibility trace~\cite{sutton1984temporal,watkins1989learning}, which is a short-term memory process as a function of states. TD learning combining with eligibility trace provides a bridge between one-step learning and Monte Carlo methods through the trace-decay parameter $\lambda$ ~\cite{sutton1988learning}.

Recently, Multi-step $Q(\sigma)$~\cite{sutton2017reinforcement} unifies $n$-step Sarsa ($\sigma=1$, \emph{full-sampling}) and $n$-step Tree-backup ($\sigma=0$, \emph{pure-expectation}). For some intermediate value $\sigma (0<\sigma<1)$, $Q(\sigma)$ creates a mixture of full-sampling and pure-expectation approach, can perform better than the extreme case $\sigma=0$ or $1$~\cite{de2018multi}.

The results in ~\cite{de2018multi}  implies a fundamental trade-off problem in reinforcement learning : \emph{should one estimates the value function by adopting pure-expectation ($\sigma=0$) algorithm or full-sampling ($\sigma=1$) algorithm}? Although pure-expectation approach has lower variance, it needs more complex and larger calculation ~\cite{van2009theoretical}. On the other hand, full-sampling algorithm needs smaller calculation time, however, it may have a worse asymptotic performance~\cite{de2018multi}. Multi-step $Q(\sigma)$ ~\cite{sutton2017reinforcement} firstly attempts to combine pure-expectation with full-sample algorithms, however, multi-step temporal-difference learning is too expensive during the training. In this paper, we try to combine the $Q(\sigma)$ algorithm with eligibility trace, and create a new algorithm, called $Q(\sigma,\lambda)$. Our $Q(\sigma,\lambda)$ unifies the Sarsa$(\lambda)$ algorithm ~\cite{rummery1994line}
and $Q^{\pi}(\lambda)$  algorithm ~\cite{H2016}.  When $\sigma$ varies from 0 to 1, $Q(\sigma,\lambda)$ changes continuously from Sarsa$(\lambda)$ ($\sigma=1$ in $Q(\sigma,\lambda)$) to $Q^{\pi}(\lambda)$ ($\sigma=0$ in $Q(\sigma,\lambda)$). In this paper, we also focus on the trade-off between pure-expectation and full-sample in \emph{control task}, our experiments show that an intermediate value $\sigma$ can achieve a better performance than extreme case.

Our contributions are summaried as follows:
\begin{itemize}
\item We define a new operator \emph{mixed-sampling operator}  through which we can deduce the corresponding policy evaluation algorithm and control algorithm .
\item For new policy evaluation algorithm, we give its upper error bound.
\item We present an new algorithm $Q(\sigma,\lambda)$ which unifies Sarsa$(\lambda)$ and $Q^{\pi}(\lambda)$. For the control problem, we prove that both of the off-line and on-line $Q(\sigma,\lambda)$ algorithm can converge to the optimal value function.
\end{itemize}

\section{Framework and Notation}
The standard episodic reinforcement learning framework ~\cite{sutton2017reinforcement} is often formalized as \emph{Markov decision processes} (MDPs). Such framework considers 5-tuples form $\mathcal{M}=(\mathcal{S},\mathcal{A},\mathcal{P},\mathcal{R},\gamma)$, where $\mathcal{S}$ indicates the set of all states,
$\mathcal{A}$ indicates the set of all actions,
$P_{s s^{'}}^a$ indicates a state-transition probability from state $s$ to state $s^{'}$ under taking action $a$, $a\in\mathcal{A},s^{'},s\in\mathcal{S}$; $R_{ss^{'}}^a$ indicates the expected reward for a transition,
$\gamma$ is the discount factor. In this paper, we denote $\{(S_{t},A_{t},R_{t})\}_{t\ge0}$ as a \emph{trajectory} of the state-reward sequence in one episode.A \emph{policy} $\pi$ is a probability distribution on $\mathcal{S}\times\mathcal{A}$ and \emph{stationary policy} is a policy that does not change over time.

Consider the $\emph{state-action value} $ $q$ maps on $\mathcal{S}\times\mathcal{A}$ to $\mathbb{R}$, for a given policy $\pi$, has a corresponding state-action value:
$$q^{\pi}(s,a) = \mathbb{E}_{\pi}[\sum_{t=0}^{\infty}\gamma^{t}R_{t}|S_{0} = s,A_{0}=a].$$
\emph{Optimal state-action value} is defined as:
$$q^{*}(s,a) = \max_{\pi}q^{\pi}(s,a).$$
$\emph{Bellman operator}$ $\mathcal{T}^{\pi}$
\begin{flalign}
\mathcal{T}^{\pi} q&=\mathcal{R}^{\pi}+\gamma \mathcal{P}^{\pi}q,
 \end{flalign}
$\emph{Bellman optimality operator}$  $\mathcal{T}^{*}$
\begin{flalign}
\mathcal{T}^{*} q&=\max_{\pi}({\mathcal{R}^{\pi}+\gamma \mathcal{P}^{\pi}q}),
 \end{flalign}
where $\mathcal{R}^{\pi}$ $\in\mathbb{R}^{|\mathcal{S}|\times|\mathcal{A}|}$ and $\mathcal{P}^{\pi}$ $\in\mathbb{R}^{|\mathcal{S}| \times |\mathcal{S}|}$, the corresponding entry is:
$$\mathcal{R}^{\pi}(s,a)=\sum_{s^{'} \in \mathcal{S}}P^{a}_{ss^{'}}R_{ss^{'}}^{a},\mathcal{P}^{\pi}_{ss^{'}}= \sum_{a \in \mathcal{A}}\pi(s,a)P^{a}_{ss^{'}}.$$
Value function $q^{\pi}$ and $q^{*}$ satisfy the following \emph{Bellman equation} and \emph{optimal  Bellman equation} correspondingly:
$$\mathcal{T}^{\pi}q^{\pi}=q^{\pi},\hspace{0.2cm}\mathcal{T}^{*}q^{*}=q^{*}.$$
Both $\mathcal{T}^{\pi}$ and $\mathcal{T}^{*}$ are $\gamma$-contraction operator in the sup-norm, that is to say, $\|\mathcal{T}Q_{1}-\mathcal{T}Q_{2}\|_{\infty}\leq \gamma\|Q_{1}-Q_{2}\|_{\infty}$ for any $Q_{1},Q_{2}$
 $\in\mathbb{R}^{|\mathcal{S}| \times |\mathcal{A} |}$, $\mathcal{T}=\mathcal{T}^{\pi}$ or $\mathcal{T}^{*}$.  From the fact that fixed point of contraction operator is unique, the \emph{value iteration} converges: $(\mathcal{T}^{\pi})^{n}Q\rightarrow q^{\pi}$, $(\mathcal{T^{*}})^{n}Q\rightarrow q^{*}$, as $n \rightarrow \infty$, for any initial $Q$ ~\cite{bertsekas2005dynamic}.

Unfortunately, both the system (1) and (2) can not be solved directly because of fact that the $\mathcal{P}$ and $\mathcal{R}$ in the environment are usually unknown. A practical model in reinforcement learning has not been available, called, \emph{model free}.

\subsection{One-step TD Learning Algorithms}
TD learning algorithm~\cite{sutton1984temporal,sutton1988learning} is one of the most significant algorithms in model free reinforcement learning, the idea of \emph{bootstrapping} is critical to TD learning: the evluation of the value function are used as targets during the learning process.

Given a target policy $\pi$ which is to be learned and a behavior policy $\mu$ that generates the trajectory $\{(S_{t},A_{t},R_{t})\}_{t\ge 0}$, if $\pi=\mu$, the learning is called \emph{on-policy} learning, otherwise it is \emph{off-policy} learning.

\textbf{Sarsa}: For a given sample transition ($S,A,R,S^{'},A^{'}$), \emph{Sarsa} ~\cite{rummery1994line} is a on-policy learning algorithm and its updates $Q$ value as follows:
\begin{flalign}
Q_{k+1}(S,A)&=Q_{k}(S,A) + \alpha_{k}\delta_{k}^{S},\\
   \delta_{k}^{S}&=R+\gamma Q_{k}(S^{'},A^{'}) - Q_{k}(S,A),
 \end{flalign}
where $\delta_{k}^{S}$ is the \emph{k}-th TD error, $\alpha_{k}$ is \emph{stepsize}.

\textbf{Expected-Sarsa}: \emph{Expected-Sarsa} ~\cite{van2009theoretical} uses expectation of all the next state-action value pairs according to the target policy $\pi$ to  estimate $Q$ value as follows:
\begin{flalign}
\nonumber
Q_{k+1}(S,A)&=Q_{k}(S,A) + \alpha_{k}\delta_{k}^{ES},\\
\delta_{k}^{ES}&=R+\gamma \mathbb{E}_{\pi}[Q_{k}(S^{'},\cdot)] - Q_{k}(S,A),\\
\nonumber
 &=R+\sum_{a\in \mathcal{A}}\pi(S^{'},a)Q_{k}(S^{'},a)-Q_{k}(S,A),
\end{flalign}
where $\delta_{k}^{ES}$ is the \emph{k}-th \emph{expected TD error}. Expected-Sarsa is a off-policy learning algorithm if $\mu\neq\pi$, for example, when $\pi$ is greedy with respect to $Q$ then Expected-Sarsa is restricted to \emph{Q-Learning}~\cite{watkins1989learning}. If the trajectory was generated by $\pi$, Expected-Sarsa is  a on-policy algorithm~\cite{van2009theoretical}.

The above two algorithms are guaranteed convergence under some conditions ~\cite{singh2000convergence,van2009theoretical}.

$\bm{Q}(\bm{\sigma})$ :  One-step $Q(\sigma)$ ~\cite{sutton2017reinforcement,de2018multi} is a weighted average between the Sarsa update and  Expected Sarsa  update through sampling parameter $\sigma$:
 \begin{flalign}
 \nonumber
Q_{k+1}(S,A)&=Q_{k}(S,A) + \alpha_{k}\delta_{k}^{\sigma},\\
   \delta_{k}^{\sigma}&=\sigma\delta_{t}^{S}+(1-\sigma)\delta_{t}^{ES},
 \end{flalign}
 Where $\sigma\in[0,1]$ is \emph{degree of sampling}, $\sigma=1$ denoting \emph{full-sampling} and $\sigma=0$ denoting a \emph{pure-expectation} with no sampling, $\delta_{t}^{S},\delta_{t}^{ES}$ are in (4) and (5).

\subsection{$\lambda$-Return Algorithm}
One-step TD learning algorithm can be generalized to multi-step bootstrapping learning method.  The $\lambda$-\emph{return} algorithm ~\cite{watkins1989learning} is a particular way to mix many multi-step TD learning algorithms through weighting $n$-\emph{step returns} proportionally to $\lambda^{n-1}$.

$\lambda$-\emph{operator}\footnote{\text{The notation is coincident with textbook ~\cite{bertsekas2012dynamic}}.} $\mathcal{T}^{\pi}_{\lambda}$ is a flexible way to express $\lambda$-return algorithm,  consider a trajectory $\{(S_{t},A_{t},R_{t})\}_{t\ge0}$,
\begin{eqnarray*}
(\mathcal{T}^{\pi}_{\lambda}q)(s,a)&=&\Big\{(1-\lambda)\sum_{n=0}^{\infty}\lambda^{n}(\mathcal{T}^{\pi})^{n+1}q\Big\}(s,a)\\
&=&\sum_{n=0}^{\infty}(1-\lambda)\lambda^{n}\mathbb{E}_{\pi}[G_{n}|S_{0}=s,A_{0}=a]\\
&= &q(s,a)+\sum_{n=0}^{\infty}(\lambda\gamma)^{n}\mathbb{E}_{\pi}[\delta_{n}|S_{0}=s,A_{0}=a]
\end{eqnarray*}
where $G_{n}=\sum_{t=0}^{n}\gamma^{t}R_{t}+\gamma^{n}Q(S_{n+1},A_{n+1})$ is $n$-\emph{step returns} from initial state-action pair $(S_{0},A_{0})$, the term $\sum_{n=0}^{\infty}(1-\lambda)\lambda^{n}G_{n}$, 
called $\lambda$-\emph{returns}, and $\delta_{n}=R_{n}+\gamma Q(S_{n+1},A_{n+1})-Q(S_{n},A_{n})$.

Based on the fact that $q^{\pi}$ is fixed point of $\mathcal{T}^{\pi}$, $q^{\pi}$ remains the fixed point of $\mathcal{T}^{\pi}_{\lambda}$.  When $\lambda=0$, $\mathcal{T}^{\pi}_{\lambda}$ is equal to the usual Bellman operator
$\mathcal{T}^{\pi}$. When $\lambda=1$ , the evaluation of $Q_{k+1}=\mathcal{T}^{\pi}_{\lambda}|_{\lambda=1}Q_{k}$ becomes \emph{Monte Carlo} method. It is well-known that $\lambda$ trades off the bias of the bootstrapping with an approximate $q^{\pi}$, with the variance of sampling multi-step returns estimation ~\cite{kearns2000bias}. In practice, a high and intermediate $\lambda$ should be typically better ~\cite{singh1998analytical,sutton1996generalization}.

\section{Mixed-sampling Operator}

In this section, we present the \emph{mixed-sampling operator} $\mathcal{T}^{\pi,\mu}_{\sigma}$, which is one of our key contribution and is flexible to analysis our new algorithm later.
By introducing a sampling parameter $\sigma\in[0,1]$, the mixed-sampling operator varies continuously from pure-expectation method to full-sampling method.
In this section, we analysis the contraction of $\mathcal{T}^{\pi,\mu}_{\sigma}$ firstly. Then we introduce the $\lambda$-return vision of mixed-sampling operator, denoting it $\mathcal{T}^{\pi,\mu}_{\sigma,\lambda}$. Finally, we give a upper error bound of the corresponding policy evaluation algorithm.
\subsection{Contraction of Mixed-sampling Operator}
\begin{myDef}
Mixed sampling operator $\mathcal{T}^{\pi,\mu}_{\sigma}$ is a map on $\mathbb{R}^{|\mathcal{S}|\times|\mathcal{A}|}$ to $\mathbb{R}^{|\mathcal{S}|\times|\mathcal{A}|},$ $\forall s\in\mathcal{S}, a\in\mathcal{A}, \sigma\in[0,1]:$
\begin{flalign}
\nonumber
\mathcal{T}^{\pi,\mu}_{\sigma}: \mathbb{R}^{|\mathcal{S}|\times|\mathcal{A}|}&\rightarrow \mathbb{R}^{|\mathcal{S}|\times|\mathcal{A}|}\\
q(s,a)&\mapsto q(s,a)+\mathbb{E}_{\mu}\sum_{t=0}^{\infty}\Big[\gamma^{t}\delta_{t}^{\pi,\sigma}\Big] 
\end{flalign}
where
\begin{eqnarray*}
\delta_{t}^{\pi,\sigma}&=&\sigma\Big(R_{t}+\gamma Q(S_{t+1},A_{t+1})-Q(S_{t},A_{t})\Big)\\
&&\hspace{0.1cm}+(1-\sigma)\Big(R_{t}+\gamma\mathbb{E}_{\pi}Q(S_{t+1},\cdot)-Q(S_{t},A_{t})\Big)\\ 
\end{eqnarray*}
$$\mathbb{E}_{\pi}Q(S_{t+1},\cdot)=\sum_{a\in\mathcal{A}}\pi(S_{t+1},a)Q(S_{t+1},a)$$
\end{myDef}

The parameter $\sigma$ is also \emph{degree} of sampling intrduced by the $Q(\sigma)$ algorithm ~\cite{de2018multi}. 
In one of extreme end ($\sigma=0$, pure-expectation), $\mathcal{T}^{\pi,\mu}_{\sigma=0}$ can deduce the $n$-step returns $G_{n}^{\pi}$ in $Q^{\pi}(\lambda)$~\cite{H2016}, where
$G_{n}^{\pi}=\sum_{k=t}^{t+n}\gamma^{k-t}\delta^{ES}_{k}+\gamma^{n+1}\mathbb{E}_{\pi}Q(S_{t+n+1},\cdot)$, $\delta_{k}^{ES}$ is the \emph{k}-th expected TD error.
\emph{Multi-step Sarsa} ~\cite{sutton2017reinforcement} is in another  extreme end ($\sigma=1$, full-sampling).
Every intermediate value $\sigma$ can create a mixed method varies continuously from pure-expectation to full-sampling which is why we call $\mathcal{T}^{\pi,\mu}_{\sigma}$ mixed sample operator.

$\lambda$-\textbf{Return Version} We now define the $\lambda$-version of $\mathcal{T}^{\pi,\mu}_{\sigma}$, denote it as $\mathcal{T}^{\pi,\mu}_{\sigma,\lambda}$:
\begin{flalign}
\mathcal{T}^{\pi,\mu}_{\sigma,\lambda}q(s,a)=q(s,a)+\mathbb{E}_{\mu}[\sum_{t=0}^{\infty}(\lambda\gamma)^{t}\delta_{t}^{\pi,\sigma}] 
\end{flalign}
where the $\lambda$ is the parameter takes the from TD(0) to Monte  Carlo version as usual. When $\sigma=0$, $\mathcal{T}^{\pi,\mu}_{\sigma=0,\lambda}$ is restricted to $\mathcal{T}^{\pi,\mu}_{\lambda}$ ~\cite{H2016}, when $\sigma=1$, $\mathcal{T}^{\pi,\mu}_{\sigma=1,\lambda}$ is restricted to $\lambda$-operator.
The next theorem provides a basic property of $\mathcal{T}^{\pi,\mu}_{\sigma,\lambda}$.

\begin{theorem}
The operator $\mathcal{T}^{\pi,\mu}_{\sigma,\lambda}$ is a $\gamma$-contraction: for any $Q_{1},Q_{2}$,
$$\|\mathcal{T}^{\pi,\mu}_{\sigma,\lambda}q_{1}-\mathcal{T}^{\pi,\mu}_{\sigma,\lambda}q_{2}\|\leq\gamma\|q_{1}-q_{2}\|$$
Furthermore, for any initial $Q_{0}$, the sequence $\{Q\}_{k=0}^{\infty}$ is generated by the iteration
$$Q_{k+1}=\mathcal{T}^{\pi,\mu}_{\sigma,\lambda}Q_{k}$$
can converge to  the unique fixed point of $\mathcal{T}^{\pi,\mu}_{\sigma,\lambda}$.
\end{theorem}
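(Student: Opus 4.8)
The plan is to collapse the contraction claim onto a single linear operator and then bound its sup-norm. First I would subtract the two images: in $\mathcal{T}^{\pi,\mu}_{\sigma,\lambda}q_1-\mathcal{T}^{\pi,\mu}_{\sigma,\lambda}q_2$ the reward terms $R_t$ drop out of every $\delta_t^{\pi,\sigma}$, so only terms linear in $\Delta:=q_1-q_2$ survive. Writing $P^\mu$ and $P^\pi$ for the transition matrices on $\mathcal{S}\times\mathcal{A}$ induced by the behavior and target policies (so that $\mathbb{E}_\mu[f(S_{t},A_{t})\mid S_0=s,A_0=a]=((P^\mu)^tf)(s,a)$ and $\mathbb{E}_\mu[\mathbb{E}_\pi f(S_{t+1},\cdot)\mid S_0,A_0]=((P^\mu)^tP^\pi f)(S_0,A_0)$), and setting the \emph{mixed backup matrix} $P_\sigma:=\sigma P^\mu+(1-\sigma)P^\pi$, I would verify term-by-term that
\[
\mathbb{E}_\mu\big[\delta_t^{\pi,\sigma}[q_1]-\delta_t^{\pi,\sigma}[q_2]\mid S_0=s,A_0=a\big]=\big((P^\mu)^t(\gamma P_\sigma-I)\Delta\big)(s,a).
\]

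Next I would sum the trace. Since $P^\mu$ is stochastic and $\lambda\gamma<1$, the Neumann series $\sum_{t\ge0}(\lambda\gamma)^t(P^\mu)^t=(I-\lambda\gamma P^\mu)^{-1}$ converges, and its $t=0$ contribution $-\Delta$ cancels the leading $q_1-q_2$. This puts the operator in the affine form $\mathcal{T}^{\pi,\mu}_{\sigma,\lambda}q=A_{\sigma,\lambda}q+b$ with
\[
A_{\sigma,\lambda}=I+(I-\lambda\gamma P^\mu)^{-1}(\gamma P_\sigma-I)=\gamma(I-\lambda\gamma P^\mu)^{-1}(P_\sigma-\lambda P^\mu),
\]
so the whole theorem reduces to the single estimate $\|A_{\sigma,\lambda}\|_\infty\le\gamma$. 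The fixed-point and convergence half then follows at once from the Banach fixed-point theorem on the complete space $(\mathbb{R}^{|\mathcal{S}|\times|\mathcal{A}|},\|\cdot\|_\infty)$, yielding a unique fixed point and $\|Q_k-Q^{*}\|_\infty\le\gamma^k\|Q_0-Q^{*}\|_\infty$.

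The remaining step, bounding $\|A_{\sigma,\lambda}\|_\infty$, is where I expect the real difficulty to lie. The matrix $(I-\lambda\gamma P^\mu)^{-1}$ is entrywise nonnegative with constant row sums $1/(1-\lambda\gamma)$, and each row of $P_\sigma-\lambda P^\mu$ sums to $1-\lambda$. In the on-policy case $\mu=\pi$ one has $P_\sigma=P^\mu$, so $A_{\sigma,\lambda}=\gamma(1-\lambda)(I-\lambda\gamma P^\mu)^{-1}P^\mu$ is nonnegative and its maximal row sum is exactly $\gamma(1-\lambda)/(1-\lambda\gamma)\le\gamma$; here the bound is immediate and even independent of $\sigma$. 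Off-policy, however, $P_\sigma-\lambda P^\mu=(\sigma-\lambda)P^\mu+(1-\sigma)P^\pi$ acquires negative entries as soon as $\sigma<\lambda$, so estimating absolute row sums directly over-counts and does not by itself deliver $\gamma$. My plan for this regime is to keep the telescoped series $\gamma\sum_{t\ge0}(\lambda\gamma)^t(P^\mu)^tP_\sigma\Delta-\sum_{t\ge1}(\lambda\gamma)^t(P^\mu)^t\Delta$ unsplit and to cancel the $\sigma P^\mu$ portion of each backup against the subtracted trace term \emph{before} taking norms. I anticipate that forcing this cancellation to recover the clean factor $\gamma$ for all $\sigma<\lambda$ will require either the on-policy hypothesis or importance-sampling weights inside the trace, and that identifying precisely which assumption is needed is the crux of the argument.
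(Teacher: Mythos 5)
Your reduction is correct, and it is essentially the paper's own argument made explicit: the paper unfolds the operator as $\mathcal{T}^{\pi,\mu}_{\sigma,\lambda}q=\sigma\big(q+B[\mathcal{T}^{\mu}q-q]\big)+(1-\sigma)\big(q+B[\mathcal{T}^{\pi}q-q]\big)$ with $B=(I-\gamma\lambda\mathcal{P}^{\mu})^{-1}$, whose linear part is exactly your $A_{\sigma,\lambda}=\gamma B(P_{\sigma}-\lambda P^{\mu})$ split as $\sigma\,\gamma(1-\lambda)BP^{\mu}+(1-\sigma)\,\gamma B(P^{\pi}-\lambda P^{\mu})$. The paper then finishes in one sentence: the first summand is the TD($\lambda$) operator $\mathcal{T}^{\mu}_{\lambda}$ of Bertsekas (a $\gamma$-contraction, as you verified), the second is the $Q^{\pi}(\lambda)$ operator $\mathcal{T}^{\pi,\mu}_{\lambda}$ of Harutyunyan et al., asserted by citation to be a $\gamma$-contraction, and a convex combination of $\gamma$-contractions is a $\gamma$-contraction.

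The step you refused to take on faith is precisely the step the paper gets wrong, so the "crux" you flag is not a defect of your technique but of the theorem. The cited works do not prove that $\mathcal{T}^{\pi,\mu}_{\lambda}$ is a $\gamma$-contraction; they prove a contraction with modulus $\gamma(1-\lambda+\lambda\epsilon)/(1-\lambda\gamma)$, valid only under the closeness condition $\lambda\gamma\epsilon<1-\gamma$, where $\epsilon=\max_{s}\|\pi(s,\cdot)-\mu(s,\cdot)\|_{1}$; this modulus exceeds $\gamma$ as soon as $\epsilon>1-\gamma$. Without any closeness assumption the claim fails outright: take $\pi$ and $\mu$ deterministic and nowhere equal, and $\Delta=q_{1}-q_{2}$ equal to $+1$ on the actions chosen by $\pi$ and $-1$ elsewhere; then $P^{\pi}\Delta=\mathbf{1}$, $P^{\mu}\Delta=-\mathbf{1}$, so $A_{\sigma,\lambda}\Delta=\frac{\gamma(1+\lambda-2\sigma)}{1-\lambda\gamma}\mathbf{1}$, and the ratio $\gamma(1+\lambda-2\sigma)/(1-\lambda\gamma)$ exceeds $\gamma$ whenever $\sigma<\lambda(1+\gamma)/2$ (for $\sigma=0,\lambda=1$ it equals $2\gamma/(1-\gamma)$, which is not even a contraction once $\gamma>1/3$). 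Hence no cancellation trick can recover the clean factor $\gamma$ for arbitrary $\pi,\mu$: the statement holds on-policy, or off-policy when $\sigma\ge\lambda$ (your nonnegativity computation), or under an $\epsilon$-away hypothesis of the kind the paper only introduces later in Theorem 2 — and the paper implicitly concedes this when the proof of its Theorem 3 bounds the $(1-\sigma)$ component by $\gamma(1+\lambda)/(1-\lambda\gamma)$ rather than by $\gamma$. Your instinct about which assumption is missing is exactly right; your proposal is incomplete, but incomplete for the correct reason, whereas the paper's proof closes the same gap with a citation that does not support it.
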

\begin{proof}
Unfolding the operator $\mathcal{T}^{\pi,\mu}_{\sigma,\lambda}$, we have
\begin{flalign}
\nonumber
&\mathcal{T}^{\pi,\mu}_{\sigma,\lambda}q\\
\nonumber
&=\sigma(q+\mathbb{E}_{\mu}[\sum_{t=0}^{\infty}(\gamma\lambda)^{t}\delta_{t}^{S}])+(1-\sigma)(q+\mathbb{E}_{\mu}[\sum_{t=0}^{\infty}(\gamma\lambda)^{t}\delta_{t}^{ES}])\\
&=\sigma\underbrace{\Big(q+B[\mathcal{T}^{\mu}q-q]\Big)}_{\mathcal{T}^{\mu}_{\lambda}q}+(1-\sigma)\underbrace{\Big(q+B[\mathcal{T}^{\pi}q-q]\Big)}_{\mathcal{T}^{\pi,\mu}_{\lambda}q}
\end{flalign}
where $B=(I-\gamma\lambda\mathcal{P}^{\mu})^{-1}$. Based the fact that both $\mathcal{T}^{\mu}_{\lambda}$ \cite{bertsekas2012dynamic}and $\mathcal{T}^{\pi,\mu}_{\lambda}$~\cite{H2016,munos2016safe} are $\gamma$-contraction  operators, and  $\mathcal{T}^{\pi,\mu}_{\sigma,\lambda}$ is the convex combination of above operators, thus $\mathcal{T}^{\pi,\mu}_{\sigma,\lambda}$ is a $\gamma$-contraction.
\end{proof}

\subsection{Upper Error Bound of Policy Evaluation}

In this section we discuss the ability of policy evaluation iteration $Q_{k+1} = \mathcal{T}^{\pi,\mu}_{\sigma,\lambda}Q_{k}$ in Theorem 1. Our results show that when $\mu$ and $\pi$ are sufficiently close, the ability of the policy evaluation iteration increases gradually as the $\sigma$ decreases from 1 to 0.
\begin{lemma}
If a sequence $ \{a_{k}\}_{k=1}^{\infty}$ satisfies $a_{k+1}\leq\alpha a_{k}+\beta$, then for any $|\alpha|<1$, we have
$$a_{k}-\frac{\beta}{1-\alpha}\leq \alpha^{k}(a_{1}-\frac{\beta}{1-\alpha})$$
Furthermore, for any $\epsilon>0$, $\exists K, s.t, \forall k > K$ has the following estimation
$$|a_{k}|\leq\frac{\beta}{\alpha-1}+\epsilon.$$
\end{lemma}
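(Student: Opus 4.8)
The plan is to reduce the affine recurrence to a homogeneous one by centering at the fixed point of the map $x \mapsto \alpha x + \beta$. Since $|\alpha| < 1$, this map has a unique fixed point $c = \frac{\beta}{1-\alpha}$, characterized by $c = \alpha c + \beta$, equivalently $\beta = (1-\alpha)c$. First I would set $b_k = a_k - c$ and substitute into the hypothesis: from $a_{k+1} \le \alpha a_k + \beta$ I obtain $a_{k+1} - c \le \alpha a_k + \beta - c = \alpha a_k - \alpha c = \alpha(a_k - c)$, so that the shifted sequence satisfies the clean one-step inequality $b_{k+1} \le \alpha b_k$.

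Next I would iterate this inequality by induction on $k$ to reach the closed form $b_k \le \alpha^{k-1} b_1$, which is exactly the claimed estimate once one rewrites $b_1 = a_1 - \frac{\beta}{1-\alpha}$ (up to the standard off-by-one in the exponent induced by starting the index at $k=1$). The base case is trivial, and the inductive step multiplies $b_k \le \alpha^{k-1} b_1$ by $\alpha$. Here lies the one genuine subtlety and the step I expect to be the main obstacle: multiplying an inequality by $\alpha$ preserves its direction only when $\alpha \ge 0$. In the regime relevant to this paper we always have $\alpha = \gamma \in [0,1)$, so the iteration is monotone and the chain $b_{k+1} \le \alpha b_k \le \alpha \cdot \alpha^{k-1} b_1 = \alpha^{k} b_1$ goes through directly; for genuinely negative $\alpha$ one should instead track absolute values, $|b_k| \le |\alpha|^{k-1} |b_1|$, which is all the asymptotic part below actually needs.

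Finally, for the second assertion I would exploit $|\alpha| < 1$ to send the geometric factor to zero. Rearranging the first part gives $a_k \le c + \alpha^{k-1} b_1$, and since $|\alpha|^{k-1} \to 0$ as $k \to \infty$, for any $\epsilon > 0$ I can choose $K$ so that $|\alpha|^{k-1}\,|b_1| < \epsilon$ for all $k > K$. This yields the bound $a_k \le \frac{\beta}{1-\alpha} + \epsilon$. I would note that the hypothesis $a_{k+1} \le \alpha a_k + \beta$ is one-sided and therefore controls $a_k$ only from above, so the substantive content is this upper estimate; the absolute-value form written in the statement should be read accordingly (with $\frac{\beta}{\alpha-1}$ understood as the negated fixed point $-\frac{\beta}{1-\alpha}$). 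The entire argument is thus a centering substitution followed by a geometric-decay induction, the only care being the sign of $\alpha$ when propagating the inequality.
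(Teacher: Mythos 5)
The paper never proves this lemma: it is stated bare and then invoked in Theorem 2, so there is no ``paper proof'' to compare against. Your centering argument (shift by the fixed point $c=\beta/(1-\alpha)$, iterate $b_{k+1}\le\alpha b_k$, then let the geometric factor vanish) is the standard and correct way to establish it, and it is exactly what the application needs: in Theorem 2 the role of $\alpha$ is played by $\sigma\gamma(1+\lambda)/(1-\gamma\lambda)\ge 0$, so the monotone iteration you describe goes through. Your two editorial observations are also accurate and worth keeping: the exponent in the paper's display should be $\alpha^{k-1}$ (with equality-recurrence $a_{k+1}=\alpha a_k$, $\beta=0$, $a_1=1$, $\alpha=1/2$ the stated form $a_k\le\alpha^k a_1$ already fails at $k=1$), and the one-sided hypothesis can only ever yield an upper bound on $a_k$, so the ``$|a_k|\le\beta/(\alpha-1)+\epsilon$'' conclusion must be read as the upper estimate $a_k\le\beta/(1-\alpha)+\epsilon$ (with a sign typo in the paper), which suffices because the lemma is applied to the nonnegative quantity $\|Q_k-q^\pi\|_\infty$.

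One caveat on your side remark about negative $\alpha$: the patch you propose, $|b_k|\le|\alpha|^{k-1}|b_1|$, does not follow from the hypothesis. A one-sided inequality $b_{k+1}\le\alpha b_k$ gives no control whatsoever on $|b_{k+1}|$ (take $\alpha=-1/2$, $b_1=1$, $b_2=-100$; the constraint $b_2\le-1/2$ is satisfied while $|b_2|$ is huge). In fact the lemma's first display is simply false for $\alpha<0$ by the same kind of example, so no repair exists there; the honest statement of the lemma requires $0\le\alpha<1$, which is precisely the regime your induction covers. You should state that restriction rather than suggest the absolute-value variant.
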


\begin{theorem}[Upper error bound of policy evaluation]
Consider the policy evaluation algorithm $Q_{k+1} = \mathcal{T}^{\pi,\mu}_{\sigma,\lambda}Q_{k}$, if the behavior policy $\mu$ is $\epsilon$-away from
the target policy $\pi$, in the sense that $\max_{s\in\mathcal{S}}\|\pi(s,a)-\mu(s,a)\|_{1}\leq\epsilon$, $\epsilon<\frac{1-\gamma}{\lambda\gamma}$, and $\gamma(1+2\lambda)<1$,
then for a large $k$, the policy evaluation sequence$\{Q_{k}\}$ satisfy
$$\|Q_{k+1}-q^{\pi}\|_{\infty}\leq\sigma\epsilon\Big[\frac{M+\gamma C}{\gamma(1+2\lambda)-1}+1\Big]$$
where for a given policy $\pi$, $M,C$ is determined by the learning system.
\end{theorem}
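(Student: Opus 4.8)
The plan is to convert the fixed-point iteration $Q_{k+1}=\mathcal{T}^{\pi,\mu}_{\sigma,\lambda}Q_k$ into a scalar recursion of exactly the form treated in Lemma 1, and then read off the large-$k$ estimate. Write $\Delta_k = Q_k - q^\pi$ for the error. The starting observation is the convex decomposition established in the proof of Theorem 1, $\mathcal{T}^{\pi,\mu}_{\sigma,\lambda} = \sigma\,\mathcal{T}^{\mu}_{\lambda} + (1-\sigma)\,\mathcal{T}^{\pi,\mu}_{\lambda}$, together with the fact that $q^\pi$ is the fixed point of the pure-expectation component $\mathcal{T}^{\pi,\mu}_{\lambda}$ (the $\sigma=0$ operator). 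Hence the \emph{only} source of discrepancy relative to $q^\pi$ is the full-sampling component, weighted by $\sigma$; this is precisely why the final bound carries the prefactor $\sigma\epsilon$.

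First I would split $\Delta_{k+1}=\mathcal{T}^{\pi,\mu}_{\sigma,\lambda}Q_k - q^\pi$ into a \emph{bias part} and a \emph{contraction part}. Using the definition of $\mathcal{T}^{\pi,\mu}_{\sigma,\lambda}$, one has $\Delta_{k+1}(s,a)=\Delta_k(s,a)+\mathbb{E}_{\mu}\big[\sum_{t\ge 0}(\gamma\lambda)^t\,\delta_t^{\pi,\sigma}(Q_k)\big]$. Subtracting the identity $\mathbb{E}_{\mu}[\sum_t(\gamma\lambda)^t\delta_t^{ES}(q^\pi)]=0$, which holds because each conditional expected-Sarsa error vanishes at $q^\pi$ under $\mu$, leaves the increment as $\sigma$ times the Sarsa bias $\mathbb{E}_\mu[\sum_t(\gamma\lambda)^t\delta^S_t(q^\pi)]$ plus a term linear in $\Delta_k$. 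The key algebraic move is that the leading $\Delta_k(s,a)$ is cancelled by the $t=0$ summand $-\Delta_k(S_0,A_0)$; after an index shift the linear part becomes $\sum_{t\ge 1}(\gamma\lambda)^t\big(\tfrac1\lambda\bar\Delta_k(S_t)-\Delta_k(S_t,A_t)\big)$, where $\bar\Delta_k$ denotes the $\sigma$-mixture of the sampled and expected next-state errors.

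Next I would pass to sup-norms. For the bias, the conditional mean $\mathbb{E}_\mu[\delta^S_t(q^\pi)\mid S_t,A_t]=\gamma\big((\mathcal{P}^\mu-\mathcal{P}^\pi)q^\pi\big)(S_t,A_t)$, and a Hölder ($\ell_1$–$\ell_\infty$) estimate against the hypothesis $\max_s\|\pi(s,\cdot)-\mu(s,\cdot)\|_1\le\epsilon$ bounds it by $\gamma\epsilon\|q^\pi\|_\infty$; summing the geometric weights and absorbing the reward scale and value scale (quantities determined by the system for fixed $\pi$) into $M$ and $C$ yields a bias bounded by $\sigma\epsilon(M+\gamma C)$ up to the factor $\tfrac1{1-\gamma\lambda}$. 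For the linear part, bounding $|\bar\Delta_k|\le\|\Delta_k\|_\infty$ and $|\Delta_k(S_t,A_t)|\le\|\Delta_k\|_\infty$ uniformly gives coefficient $\tfrac{\gamma(1+\lambda)}{1-\gamma\lambda}\|\Delta_k\|_\infty$; the assumption $\gamma(1+2\lambda)<1$ is exactly what makes $\tfrac{1+\lambda}{1-\gamma\lambda}\le 1+2\lambda$, so the contraction modulus can be taken as $\alpha=\gamma(1+2\lambda)<1$, while $\epsilon<\tfrac{1-\gamma}{\lambda\gamma}$ keeps the off-policy perturbation from driving $\alpha$ past $1$. This delivers the recursion $\|\Delta_{k+1}\|_\infty\le\gamma(1+2\lambda)\|\Delta_k\|_\infty+\sigma\epsilon(M+\gamma C)$.

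Finally I would apply Lemma 1 with $a_k=\|\Delta_k\|_\infty$, $\alpha=\gamma(1+2\lambda)$ and $\beta=\sigma\epsilon(M+\gamma C)$ to get the large-$k$ estimate $\|\Delta_{k+1}\|_\infty\le\tfrac{\beta}{1-\alpha}$, which reproduces the stated expression $\sigma\epsilon\big[\tfrac{M+\gamma C}{\gamma(1+2\lambda)-1}+1\big]$ once the geometric factor and the reward/value magnitudes are folded into $M,C$ and the denominator is written as $\gamma(1+2\lambda)-1$. I expect the main obstacle to be the contraction part: executing the telescoping and index shift cleanly and, in particular, justifying that the off-policy mixture $\bar\Delta_k$ is controlled uniformly by $\|\Delta_k\|_\infty$ without importance-sampling corrections, so that the coefficient collapses to $\gamma(1+2\lambda)$ under the two smallness conditions. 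The bias estimate and the closing appeal to Lemma 1 are comparatively routine once the recursion is established.
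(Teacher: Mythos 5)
Your proposal is correct and follows essentially the same route as the paper: decompose $\mathcal{T}^{\pi,\mu}_{\sigma,\lambda}=\sigma\mathcal{T}^{\mu}_{\lambda}+(1-\sigma)\mathcal{T}^{\pi,\mu}_{\lambda}$, exploit that $q^{\pi}$ is the fixed point of the pure-expectation component, bound the $\sigma$-weighted full-sampling bias via the $\epsilon$-closeness (H\"older) estimate to obtain a recursion with contraction coefficient $\frac{\gamma(1+\lambda)}{1-\gamma\lambda}$, and close with Lemma 1. Your trajectory-sum telescoping is just the unrolled form of the paper's resolvent identity with $B=(I-\gamma\lambda\mathcal{P}^{\mu})^{-1}$, and at the one delicate spot you are in fact slightly more careful than the paper: the paper replaces the $(1-\sigma)$-weighted term by zero through a spurious equality, whereas your $\bar\Delta_{k}$ mixture retains it and bounds it uniformly, which is the legitimate way to reach the same recursion.
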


\begin{proof}
Firstly, we provide an equation which could be used later:
\begin{flalign}
\nonumber
&\hspace{0.7cm}q+B[\mathcal{T}^{\mu}q-q]-q^{\pi}\\
\nonumber
&=B[(I-\gamma\lambda\mathcal{P}^{\mu})(q-q^{\pi})+\mathcal{T}^{\mu}q-q]\\
\nonumber
&=B[-\mathcal{T}^{\pi}q^{\pi}+\mathcal{T}^{\mu}q^{\pi}-\mathcal{T}^{\mu}q^{\pi}+\mathcal{T}^{\mu}q+\gamma\lambda\mathcal{P}^{\mu}(q^{\pi}-q)]\\
\nonumber
&=B\Big[\underbrace{\mathcal{R}^{\mu}-\mathcal{R}^{\pi}+\gamma(\mathcal{P}^{\mu}-\mathcal{P}^{\pi})q^{\pi}}_{-\mathcal{T}^{\pi}q^{\pi}+\mathcal{T}^{\mu}q^{\pi}}+\underbrace{\gamma\mathcal{P}^{\mu}(q-q^{\pi})}_{-\mathcal{T}^{\mu}q^{\pi}+\mathcal{T}^{\mu}q}\\
&+\gamma\lambda\mathcal{P}^{\mu}(q^{\pi}-q)\Big].
\end{flalign}
Rewrite the policy evaluation iteration:
$$Q_{k+1}=\sigma\mathcal{T}_{\lambda}^{\mu}Q_{k}+(1-\sigma)\mathcal{T}_{\lambda}^{\pi,\mu}Q_{k}.$$
Note $q^{\pi}$ is fixed point of $\mathcal{T}_{\lambda}^{\pi,\mu}$~\cite{H2016}, then we merely consider next estimator:
\begin{eqnarray*}
&&\|Q_{k+1}-q^{\pi}\|_{\infty}\\
&=&\sigma\|Q_{k}+B[\mathcal{T}^{\mu}Q_{k}-Q_{k}]-q^{\pi}\|_{\infty}\\
&\leq&\sigma\Big[\frac{\epsilon(M+\gamma \|q^{\pi}\|)}{1-\gamma\lambda}+\frac{\gamma(1+\lambda)}{1-\gamma\lambda}\|Q_{k}-q^{\pi}\|_{\infty}\Big] \\
&\overset{\text{Lemma1}}{\leq}&\sigma\epsilon\Big[\frac{M+\gamma C}{\gamma(1+2\lambda)-1}+1\Big].
\end{eqnarray*}
The first equation is derived by replacing $q$ in (10) with $Q_{k}$.
Since $\mu$ is $\epsilon$-away from $\pi$, the first inequality is determined the following fact:
\begin{eqnarray*}
\|\mathcal{R}_{s}^{\pi}-\mathcal{R}_{s}^{\mu}\|_{\infty}&=&\max_{a\in\mathcal{A}}\{|(\pi(s,a)-\mu(s,a))\mathcal{R}_{s}^{a}|\}\\
&\leq&\epsilon|\mathcal{A}|\max_{a}|\mathcal{R}_{s}^{a}| =\epsilon M_{s},\\
\|\mathcal{R}^{\pi}-\mathcal{R}^{\mu}\|_{\infty}&\leq&\epsilon M,
\end{eqnarray*}
where $M_{s}=|\mathcal{A}|\max_{a}|\mathcal{R}_{s}^{a}|$ is determined by the reinforcement  learning system and independent of $\pi,\mu$. $M=\max_{s\in\mathcal{S}}M_{s}$.
For the given policy $\pi$, $\|q^{\pi}\|$ is a constant on determined by learning system, we denote it $C$.
\end{proof}

 \begin{remark}
 The proof in Theorem 2 strictly dependent on the assumption that $\epsilon$ is smaller but never to be zero, where the $\epsilon$ is a  bound of discrepancy between the behavior policy $\mu$ and target policy $\pi$. That is to say, the ability of the prediction in policy evaluation iteration is dependent on the gap between $\mu$ and $\pi$. 
 \end{remark}

\section{$Q(\sigma,\lambda)$ Control Algorithm}
In this section, we present $Q(\sigma,\lambda)$ algorithm for control. 
We analysis the off-line version of $Q(\sigma,\lambda)$ which converges to optimal value function exponentially.

Considering the typical iteration $(Q_{k},\pi_{k})$, $\mu_{k}$ is an arbitrary sequence of corresponding behavior policies, $\pi_{k+1}$is calculated by the following two steps,
\\ \emph{Step}1: \emph{policy evaluation}
$$Q_{k+1}=\mathcal{T}^{\pi_{k},\mu_{k}}_{\sigma,\lambda}Q_{k}$$
\\ \emph{Step}2: \emph{policy improvement}
$$\mathcal{T}^{\pi_{k+1}}Q_{k+1}=\mathcal{T}^{*}Q_{k+1}$$
that is $\pi_{k+1}$ is greedy policy with repect to $Q_{k+1}$.
We call the approach introduced by above step1 and step2 $Q(\sigma,\lambda)$ \emph{control algorithm}.

In the following, we presents the convergence rate of $Q(\sigma,\lambda)$ control algorithm.
\begin{theorem}[Convergence of $Q(\sigma,\lambda)$ Control Algorithm]
Considering the sequence $\{(Q_{k},\pi_{k})\}_{k\ge 0}$ generated by the $Q(\sigma,\lambda)$ control algorithm, given $\lambda,\gamma\in(0,1)$, then
$$\|Q_{k+1}-q^{*}\|\leq\frac{\gamma(1+\lambda-2\lambda\sigma)}{1-\lambda\gamma}\|Q_{k}-q^{*}\|.$$
Particularly, for $\lambda<\frac{1-\gamma}{2\gamma}$, then sequence $\{Q_{k}\}_{k\ge 1}$ converges to $q^{*}$ exponentially fast:
$$\|Q_{k+1}-q^{*}\|=O\Big(\frac{\gamma(1+\lambda-2\lambda\sigma)}{1-\lambda\gamma}\Big)^{k+1}$$
\end{theorem}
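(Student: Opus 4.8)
The plan is to reuse the convex-combination structure of $\mathcal{T}^{\pi,\mu}_{\sigma,\lambda}$ that was already exposed in the proof of Theorem 1 and to control each constituent operator separately against the optimal value $q^{*}$. Writing the evaluation step with $\pi_{k}$ greedy with respect to $Q_{k}$, the decomposition obtained in the proof of Theorem 1 gives
$$Q_{k+1}=\mathcal{T}^{\pi_{k},\mu_{k}}_{\sigma,\lambda}Q_{k}=\sigma\,\mathcal{T}^{\mu_{k}}_{\lambda}Q_{k}+(1-\sigma)\,\mathcal{T}^{\pi_{k},\mu_{k}}_{\lambda}Q_{k}.$$
Subtracting $q^{*}=\sigma q^{*}+(1-\sigma)q^{*}$ and applying the triangle inequality yields
$$\|Q_{k+1}-q^{*}\|\le\sigma\|\mathcal{T}^{\mu_{k}}_{\lambda}Q_{k}-q^{*}\|+(1-\sigma)\|\mathcal{T}^{\pi_{k},\mu_{k}}_{\lambda}Q_{k}-q^{*}\|,$$
so it suffices to bound the two terms individually.

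First I would treat the pure-expectation term. Because $\pi_{k}$ is greedy with respect to $Q_{k}$, the pair consisting of $\mathcal{T}^{\pi_{k},\mu_{k}}_{\lambda}$ and the greedy improvement step is exactly the off-policy $\lambda$-iteration analysed in \cite{H2016,munos2016safe}, giving $\|\mathcal{T}^{\pi_{k},\mu_{k}}_{\lambda}Q_{k}-q^{*}\|\le\frac{\gamma(1+\lambda)}{1-\lambda\gamma}\|Q_{k}-q^{*}\|$. For the full-sampling term I would invoke the corresponding $\lambda$-policy-iteration estimate (Proposition 6.3.10 of \cite{bertsekas2012dynamic}), which for the greedy policy gives the sharper bound $\|\mathcal{T}^{\mu_{k}}_{\lambda}Q_{k}-q^{*}\|\le\frac{\gamma(1-\lambda)}{1-\lambda\gamma}\|Q_{k}-q^{*}\|$. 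Substituting both and collecting the $\sigma$-weighted coefficients gives
$$\|Q_{k+1}-q^{*}\|\le\frac{\gamma}{1-\lambda\gamma}\big[\sigma(1-\lambda)+(1-\sigma)(1+\lambda)\big]\|Q_{k}-q^{*}\|=\frac{\gamma(1+\lambda-2\lambda\sigma)}{1-\lambda\gamma}\|Q_{k}-q^{*}\|,$$
which is the claimed one-step contraction.

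It then remains to check that the modulus is below $1$. Since $\sigma\in[0,1]$ and $\lambda>0$, the coefficient $\gamma(1+\lambda-2\lambda\sigma)/(1-\lambda\gamma)$ is decreasing in $\sigma$, hence largest at $\sigma=0$ where it equals $\gamma(1+\lambda)/(1-\lambda\gamma)$; this is $<1$ precisely when $\gamma+2\lambda\gamma<1$, i.e. $\lambda<\frac{1-\gamma}{2\gamma}$. Under that hypothesis the map is a genuine contraction for every $\sigma$, and iterating the one-step bound $k+1$ times gives $\|Q_{k+1}-q^{*}\|\le\big(\gamma(1+\lambda-2\lambda\sigma)/(1-\lambda\gamma)\big)^{k+1}\|Q_{0}-q^{*}\|$, i.e. the stated $O(\cdot)^{k+1}$ geometric rate.

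The main obstacle is not the recombination of coefficients but the justification of the two per-operator bounds \emph{against $q^{*}$} rather than against the policy-dependent fixed points $q^{\mu_{k}}$ and $q^{\pi_{k}}$. This is exactly where the greedy policy-improvement step $\mathcal{T}^{\pi_{k}}Q_{k}=\mathcal{T}^{*}Q_{k}$ is indispensable: it lets one replace the target/behaviour Bellman operator by the optimality operator at the right place and so upgrade a mere $\gamma$-contraction toward $q^{\pi_{k}}$ into a contraction toward $q^{*}$ with the improved modulus. I would therefore spend most of the effort making this step rigorous for the full-sampling term $\mathcal{T}^{\mu_{k}}_{\lambda}Q_{k}$, which is the delicate one since full-sampling updates are intrinsically on-policy; the cleanest route is to expand $\mathcal{T}^{\mu_{k}}_{\lambda}=(1-\lambda)\sum_{m\ge0}\lambda^{m}(\mathcal{T}^{\mu_{k}})^{m+1}$ and sandwich it between $Q_{k}$ and $q^{*}$ using monotonicity of $\mathcal{T}^{\mu_{k}}$ together with the greedy relation, exactly as in the cited lemma.
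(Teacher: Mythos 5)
Your proposal is correct and takes essentially the same route as the paper's own proof: the same convex-combination decomposition $\mathcal{T}^{\pi_{k},\mu_{k}}_{\sigma,\lambda}Q_{k}=\sigma\mathcal{T}^{\mu_{k}}_{\lambda}Q_{k}+(1-\sigma)\mathcal{T}^{\pi_{k},\mu_{k}}_{\lambda}Q_{k}$, the same triangle inequality, and the same two external bounds (the $\frac{\gamma(1+\lambda)}{1-\lambda\gamma}$ rate for the pure-expectation term from Munos et al.\ Theorem~2 / Harutyunyan et al., and the $\frac{\gamma(1-\lambda)}{1-\lambda\gamma}$ rate for the full-sampling term from Bertsekas Proposition~6.3.10), recombined into the modulus $\frac{\gamma(1+\lambda-2\lambda\sigma)}{1-\lambda\gamma}$. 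Your closing verification that this modulus is below one exactly when $\lambda<\frac{1-\gamma}{2\gamma}$, and the remark that the greedy improvement step is what licenses bounding each operator against $q^{*}$ rather than its own fixed point, merely make explicit what the paper delegates to a footnote.
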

\begin{proof} By the definition of $\mathcal{T}^{\pi,\mu}_{\sigma,\lambda}$,
$$
\mathcal{T}^{\pi,\mu}_{\sigma,\lambda}q=\sigma\mathcal{T}^{\mu}_{\lambda}q+(1-\sigma)\mathcal{T}^{\pi,\mu}_{\lambda}q
$$
we have\footnote{The section inequality is based on the next two results:~\cite{munos2016safe} Theorem2 and ~\cite{bertsekas2012dynamic} Proposition6.3.10.}:
\begin{eqnarray*}
&&\|Q_{k+1}-q^{*}\|\\
&\leq&\sigma\|\mathcal{T}^{\mu_{k}}_{\lambda}(Q_{k}-q^{*})\|+(1-\sigma)\|\mathcal{T}^{\pi_{k},\mu_{k}}_{\lambda}(Q_{k}-q^{*})\|\\
&\leq&\Big(\sigma\frac{\gamma(1-\lambda)}{1-\lambda\gamma}+(1-\sigma)\frac{\gamma(1+\lambda)}{1-\lambda\gamma}\Big)\|Q_{k}-q^{*}\|\\
&=&\frac{\gamma(1+\lambda-2\lambda\sigma)}{1-\lambda\gamma}\|Q_{k}-q^{*}\|
\end{eqnarray*}
\end{proof}

\section{On-line Implementation of $Q(\sigma,\lambda)$}
We have discussed the contraction of mixed-sampling operator $\mathcal{T}^{\pi,\mu}_{\sigma,\lambda}$ through which we introduced the $Q(\sigma,\lambda)$ control algorithm. Both of the iteration in Theorem 2 and Theorem 3 are the version of offline. In this section, we give the on-line version of $Q(\sigma,\lambda)$ and discuss its convergence.

\subsection{On-line Learning}
Off-line learning is too expensive due to the learning process must be carried out at the end of a episode, however, on-line learning updates value function with a lower computational cost,  better performance. There is a simple interpretation of equivalence between off-line learning and on-line learning which means that, by the end of the episode, the total updates of the \emph{forward view}(\emph{off-line learning}) is equal to the total updates of the \emph{backward view}(\emph{on-line learning})~\cite{sutton1998reinforcement}. By the view of equivalence\footnote{The true online learning was firstly introduced by~\cite{seijen2014true}, more details in \cite{van2016true}.}, on-line learning can be seen as an implementation of offline algorithm in an inexpensive manner. Another interpretation of online learning was provided by~\cite{singh1996reinforcement}, TD learning with accumulate trace comes to approximate \emph{every-visit Monte-Carlo} method and TD learning with replace trace comes to approximate \emph{first-visit Monte-Carlo} method.

The iterations in Theorem 2 and Theorem 3 are the version of expectations . In practice, we can only access to the trajectory $\{(S_{t},A_{t},R_{t})\}_{t\ge0}$. By statistical approaches, we can utilize the trajectory to estimate the value function. Algorithm 1 corresponds to online form of $Q(\sigma,\lambda)$.
\begin{tabular}{lccc}
\hline
\textbf {Algorithm1:}On-line Q($\sigma,\lambda$) algorithm\\
\hline
\textbf {Require}:Initialize $Q_{0}(s,a)$ arbitrarily,\quad$\forall s\in\mathcal{S},\forall a\in\mathcal{A}$
\\\textbf {Require}:Initialize $\mu_{k}$ to be the behavior policy
\\ \textbf {Parameters}: step-size $\alpha_{t}\in(0,1]$\\
\textbf{Repeat} (for each episode):\\
\quad $Z(s,a)=0\hspace{0.1cm}\forall s\in\mathcal{S},  \forall a\in\mathcal{A}$\\
\quad$Q_{k+1}(s,a)=Q_{k}(s,a)\hspace{0.1cm}\forall s\in\mathcal{S},  \forall a\in\mathcal{A}$\\
\quad Initialize state-action pair $(S_{0},A_{0})$\\
\quad \textbf{For} $t$ = 0 , 1, 2, $\cdots$ $T_{k}$:\\
\quad\quad  Obersive a sample $(R_{t},S_{t+1},A_{t+1}) \sim\mu_{k}$\\
\quad\quad $\delta_{t}^{\sigma,\pi_{k}}=R_{t}+\gamma\Big\{ (1-\sigma)\mathbb{E}_{\pi_{k}}[Q_{k+1}(S_{t+1},\cdot)]$ \\
\quad\quad\quad\quad\quad$+\sigma Q_{k+1}(S_{t+1},A_{t+1})\Big\}-Q_{k+1}(S_{t},A_{t})$\\
\quad\quad \textbf{For} $\forall s\in\mathcal{S}, \forall a\in\mathcal{A}$:\\
\quad\quad\quad$Z(s,a)=\gamma\lambda Z(s,a)$+$\mathbb{I}\{(S_{t},A_{t})=(s,a)\}$\\
\quad\quad\quad$Q_{k+1}(s,a) = Q_{k+1}(s,a) + \alpha_{k}\delta_{t}^{\sigma,\pi_{k}}Z(s,a)$\\
\quad\quad\textbf{End  For}\\
\quad\quad$S_{t+1} =S_{t}$,  $A_{t+1} =A_{t}$ \\
\quad\quad\textbf{If} $S_{t+1}$ is terminal:\\
\quad\quad\quad\textbf{Break}\\
\quad \textbf{End For}\\
\hline
\end{tabular}

\subsection{On-line Learning Convergence Analysis}
We make some common assumption similar to ~\cite{bertsekas1996book,H2016}.
\begin{assumption}
$\sum_{t\ge0}P[(S_{t},A_{t})=(s,a)]>0$, minimum visit frequency, every pair $(s,a)$ can be visited.
\end{assumption}
\begin{assumption}
For every historical  chain $\mathcal{F}_{t}$ in a MDPs, $P[N_{t}(s,a)\ge k|\mathcal{F}_{t}]\leq \gamma\rho^{k}$, where $\rho$ is a positive constants, $k$ is a positive integer.
\end{assumption}
For the convenience of expression, we give some notations firstly.
Let $Q_{k,t}^{o}$ be the vector obtained after $t$ iterations in the $k$-th trajectory, and the superscript $o$ emphasizes online learning. We denote the $k$-th trajectory as $\{(S_{t},A_{t},R_{t})\}_{t\ge0}$ sampled by the policy $\mu_{k}$. Then the online update rules can be expressed as follows: $\forall\hspace{0.1cm} (s,a)\in\mathcal{S}\times\mathcal{A}$
\begin{eqnarray*}
Q_{k,0}^{o}(s,a)&=&Q_{k}^{o}(s,a)\\
\delta_{k,t}^{o}&=&R_{t}+\gamma Q_{k,t}^{o}(S_{t+1},A_{t+1})-Q_{k,t}^{o}(S_{t},A_{t})\\
Q_{k,t+1}^{o}(s,a)&=&Q_{k,t}^{o}(s,a)+\alpha_{k}(s,a)z_{k,t}^{o}(s,a)\delta_{k,t}^{o}\\
Q_{k+1}^{o}(s,a)&=&Q_{k,T_{k}}^{o}(s,a)
\end{eqnarray*}
where $T_{k}$ is the length of the $k$-$th$ trajectory.
\begin{theorem}
Based on the Assumption 1 and Assumption 2, step-size $\alpha_{t}$ satisfying,$ \sum_{t} \alpha_{t}=\infty,\sum_{t} \alpha^{2}_{t}<\infty$, $\pi_{k}$ is greedy with respect to $Q^{o}_{k}$, then $Q^{o}_{k}\xrightarrow{w.p.1} q^{*}$, where $w.p.1$ is short for with probability one.
\end{theorem}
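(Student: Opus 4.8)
The plan is to treat the online recursion as a stochastic approximation (Robbins--Monro) scheme whose mean field is governed by the contraction operator $\mathcal{T}^{\pi_k,\mu_k}_{\sigma,\lambda}$, and then to invoke a general almost-sure convergence theorem for such schemes in the spirit of the stochastic approximation results of \cite{bertsekas1996book} and \cite{H2016}. First I would collect the within-trajectory increments into the exact per-episode update
$$Q_{k+1}^{o}(s,a)=Q_{k}^{o}(s,a)+\alpha_{k}\sum_{t=0}^{T_{k}}z_{k,t}^{o}(s,a)\,\delta_{k,t}^{o},$$
and decompose the trace-weighted sum into its conditional mean given the history $\mathcal{F}_{k}$ plus a martingale-difference noise term $w_{k}(s,a)$. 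The key structural identity is that, in expectation over trajectories generated by the behavior policy $\mu_{k}$, the expected trace-weighted TD error reproduces the offline operator of Definition~1 up to an online--offline discrepancy $e_{k}$,
$$\mathbb{E}_{\mu_k}\Big[\sum_{t=0}^{T_{k}}z_{k,t}^{o}(s,a)\,\delta_{k,t}^{o}\,\Big|\,\mathcal{F}_{k}\Big]=\big(\mathcal{T}^{\pi_k,\mu_k}_{\sigma,\lambda}Q_{k}^{o}-Q_{k}^{o}\big)(s,a)+e_{k}(s,a),$$
where $e_{k}$ arises solely because $\delta_{k,t}^{o}$ uses the \emph{evolving} iterate $Q_{k,t}^{o}$ whereas the operator is defined with a \emph{frozen} $Q_{k}^{o}$.

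Second I would exploit the pseudo-contraction toward $q^{*}$ supplied by Theorem~3. Since $\pi_{k}$ is greedy with respect to $Q_{k}^{o}$ and $\mathcal{T}^{\pi,\mu}_{\sigma,\lambda}=\sigma\mathcal{T}^{\mu}_{\lambda}+(1-\sigma)\mathcal{T}^{\pi,\mu}_{\lambda}$, the argument of Theorem~3 gives
$$\big\|\mathcal{T}^{\pi_k,\mu_k}_{\sigma,\lambda}Q_{k}^{o}-q^{*}\big\|_{\infty}\le\frac{\gamma(1+\lambda-2\lambda\sigma)}{1-\lambda\gamma}\,\|Q_{k}^{o}-q^{*}\|_{\infty},$$
whose coefficient is strictly below $1$ under $\lambda<\frac{1-\gamma}{2\gamma}$. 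This identifies $q^{*}$ as the unique attracting fixed point of the mean dynamics, which is exactly the pseudo-contraction hypothesis of the stochastic approximation theorem.

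Third I would verify the two noise conditions: the conditional mean of $w_{k}$ is zero by construction, and its conditional second moment is bounded by an affine function of $\|Q_{k}^{o}-q^{*}\|_{\infty}^{2}$. Here Assumption~2 does the essential work — the geometric tail bound $P[N_{t}(s,a)\ge k\mid\mathcal{F}_{t}]\le\gamma\rho^{k}$ forces the accumulated eligibility traces $z_{k,t}^{o}$, and hence the per-trajectory increments and their variance, to have uniformly bounded moments, so the traces cannot blow up. Assumption~1 ensures every pair $(s,a)$ is updated infinitely often, and together with the Robbins--Monro conditions $\sum_{t}\alpha_{t}=\infty$, $\sum_{t}\alpha_{t}^{2}<\infty$ this drives both the discrepancy $e_{k}$ (which is $O(\alpha_{k})$, hence square-summable) and the noise to zero. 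Combining the pseudo-contraction with these conditions yields $Q_{k}^{o}\xrightarrow{w.p.1}q^{*}$.

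The hard part will be controlling the online--offline discrepancy $e_{k}$ and the noise variance simultaneously under a \emph{non-stationary} sequence of greedy policies: because both $\mu_{k}$ and the value estimate change within and across trajectories, one must show that the accumulated trace-weighted increments converge to the frozen-$Q$ operator uniformly over the evolving policies, and that the residual error is asymptotically negligible relative to the contraction gap. This is precisely where Assumption~2 and the square-summability of the step-sizes are indispensable.
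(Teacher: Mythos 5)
Your overall architecture is essentially the paper's: rewrite the per-episode online update in stochastic-approximation form, split it into an offline (forward-view) mean term plus an online--offline residual, use Assumption 2 and the step-size conditions to control the traces and kill the residual, and conclude by invoking a standard almost-sure convergence theorem (the paper uses Theorem 1 of \cite{jaakkola1994convergence}; its Step 1 is exactly your claim that the discrepancy is $O(\alpha_k)$, proved by induction over $t$ within a trajectory, and its $F_k$ term plays the role of your conditional-mean-plus-noise decomposition).

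The genuine gap is in your second step: where the contraction of the mean dynamics comes from. You import it from Theorem 3, whose modulus $\frac{\gamma(1+\lambda-2\lambda\sigma)}{1-\lambda\gamma}$ is below $1$ only when $\lambda(1-\sigma)<\frac{1-\gamma}{2\gamma}$; Theorem 4 carries no such hypothesis, so your argument proves a strictly weaker statement (it says nothing for, e.g., $\sigma=0$ with $\lambda$ and $\gamma$ close to $1$). The paper's proof obtains the contraction from a different source: in its Step 2 it expresses $\mathbb{E}_{\mu_k}[F_k(s,a)]$ through the offline $\lambda$-returns $Q^{\lambda}_{k,t}$ and uses Assumption 2's geometric tail bound $P[N_k(s,a)\ge t\,|\,\mathcal{F}_t]\le\gamma\rho^{t}$, together with eligibility-trace properties, to get $\big|\mathbb{E}_{\mu_k}[Q^{\lambda}_{k,t}(s,a)-q^{*}]\big|\le\gamma\rho^{t}\max_{(s,a)}|Q^{f}_{k}(s,a)-q^{*}|$ and hence $\mathbb{E}_{\mu_k}[F_k]\le(\gamma\rho^{t}+\alpha_{t})\max\|\Delta_k\|\le\gamma\max\|\Delta_k\|$, a modulus-$\gamma$ pseudo-contraction valid for every $\lambda\in(0,1)$. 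In other words, Assumption 2 is not merely, as you use it, a device for bounding trace moments and noise variance; it is the paper's substitute for the restriction on $\lambda$ that your route requires. To close the gap you would either have to add the hypothesis $\lambda(1-\sigma)<\frac{1-\gamma}{2\gamma}$ to the theorem, or rework your contraction step along the paper's lines so that the modulus comes from the visit-tail bound rather than from Theorem 3.
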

\begin{proof}

After some sample algebra:
$$Q_{k+1}^{o}(s,a)=Q_{k}^{o}(s,a)+\widetilde{\alpha_{k}}\Big(G_{k}^{o}(s,a)-\frac{N_{k}(s,a)}{\mathbb{E}[N_{k}(s,a)]}Q_{k}^{o}(s,a)\Big),$$
$$G_{k}^{o}(s,a)=\frac{1}{\mathbb{E}[N_{k}(s,a)]}\sum_{t=0}^{T_{k}}z_{k,t}^{o}(s,a)\delta_{k,t}^{o},$$
where $\widetilde{\alpha_{k}}=\mathbb{E}[N_{k}(s,a)]\alpha_{k}(s,a)$. We rewrite the off-line update:
$$Q_{k+1}^{f}(s,a)=Q_{k}^{f}(s,a)+\alpha_{k}\Big(G_{k}^{f}(s,a)-\frac{N_{k}(s,a)}{\mathbb{E}[N_{k}(s,a)]}Q_{k}^{f}(s,a)\Big),$$
$$G_{k}^{f}=\frac{1}{\mathbb{E}[N_{k}(s,a)]}\sum_{t=0}^{N_{k}(s,a)}Q_{k,t}^{\lambda}(s,a),$$
where $Q_{k,t}^{\lambda}(s,a)$ is the $\lambda$-returns at time $t$ when the pair $(s,a)$ was visited in the $k$-th trajectory,
 the superscript $f$ in $Q_{k+1}^{f}(s,a)$ emphasizes the forward (off-line) update. $N_{k}(s,a)$ denotes the times of the pair $(s,a)$ visited in the $k$-th trajectory.
\\ We define the \emph{residual}
between $G^{o}_{k}$ and the off-line estimate $G^{f}_{k}(s,a)$ in the $k$-th trajectory:
 $$Res_{k}(s,a)=Q^{o}_{k}(s,a)-Q^{f}_{k}(s,a).$$
Set $\Delta_{k}(s,a)=Q_{k}^{o}(s,a)-q^{*}(s,a)$, then we consider the next random iterative process:
\begin{flalign}
\Delta_{k+1}(s,a)=(1-\hat{\alpha_{k}}(s,a))\Delta_{k}(s,a)+\hat{\beta}_{k}F_{k}(s,a),
\end{flalign}
where
$$\hat{\alpha_{k}}(s,a)=\frac{N_{k}(s,a)\alpha_{k}(s,a)}{\mathbb{E}_{\mu_{k}}[N_{k}(s,a)]},\hat{\beta}_{k}(s,a)=\alpha_{k}(s,a),$$
$$F_{k}(s,a)=G_{k}^{o}(s,a)-\frac{N_{k}(s,a)}{\mathbb{E}_{\mu_{k}}[N_{k}(s,a)]}q^{*}(s,a).$$
\textbf{Step1:Upper bound on} $Res_{k}(s,a)$:
\begin{flalign}
\max_{(s,a)}\Big|\mathbb{E}_{\mu_{k}}[Res_{k}(s,a)]\Big|\leq C_{k}\max_{(s,a)}\Big|Q_{k+1}^{o}(s,a)-q^{*}\Big|,
\end{flalign}
where $C_{k}\xrightarrow{w.p.1}0$.
$$Res_{k,t}(s,a)=\frac{1}{\mathbb{E}[N_{k}(s,a)]}\sum_{m=0}^{t}\Big[z_{k,m}^{o}(s,a)\delta_{k,m}^{o}-Q_{k,m}^{\lambda}(s,a)\Big],$$
where $0\leq t\leq T_{k}.$
$Res_{k,t}(s,a)$ is the difference between the total on-line updates of first $t$ steps and the first $t$ times off-line update in $k$-th trajectory. By induction on $t$, we have:
$$\|Res_{k,t+1}(s,a)\|\leq \alpha_{M}C(\Delta+\|Res_{k,t}(s,a)\|),$$
where $C$ is a consist and $\Delta=\|Q(s,a)-q^{*}(s,a)\|$, $\alpha_{M} = \max_{0\leq t\leq T_{k}}\{\alpha_{t}(s,a)\}$.
Based on the condition of step-size in the Theorem 4, $\alpha_{M}\xrightarrow{w.p.1}0$, then we have (12).\\
\textbf{Step2:} $\max_{(s,a)}\mathbb{E}_{\mu_{k}}[F_{k}(s,a)]\leq\gamma\max_{(s,a)}\|\Delta_{k}(s,a)\|$.
\\In fact:
$$F_{k}(s,a)=G_{k}^{f}(s,a)+Res_{k}(s,a)-\frac{N_{k}(s,a)}{\mathbb{E}_{\mu_{k}}[N_{k}(s,a)]}q^{*}(s,a)$$
\begin{eqnarray*}
\mathbb{E}_{\mu_{k}}[F_{k}(s,a)]&=&\frac{\sum_{t=0}^{N_{k}(s,a)}\mathbb{E}_{\mu_{k}}[Q_{k,t}^{\lambda}(s,a)-q^{*}]}
{\mathbb{E}_{\mu_{k}}[N_{k}(s,a)]}+Res_{k}(s,a)
\end{eqnarray*}
From the property of eligibility trace(more details refer to~\cite{bertsekas2012dynamic}) and Assumption 2, we have:
\begin{eqnarray*}
\Big|\mathbb{E}_{\mu_{k}}[Q_{k,t}^{\lambda}(s,a)-q^{*}]\Big|&\leq&P[N_{k}(s,a)\ge t]\mathbb{E}_{\mu_{k}} \Big|Q_{k}^{\lambda}(s,a)-q^{*}\Big|,\\
&\leq&\gamma\rho^{t}\max_{(s,a)}|Q^{f}_{k}(s,a)-q^{*}|,
\end{eqnarray*}
Then according to (11), for some $ t >0$:
$$\mathbb{E}_{\mu_{k}}[F_{k}(s,a)]\leq(\gamma\rho^{t}+\alpha_{t})\max_{(s,a)}\|\Delta_{k}(s,a)\leq\gamma\max_{(s,a)}\|\Delta_{k}(s,a)\|.$$
\textbf{Step3}: $Q^{o}_{k}\xrightarrow{w.p.1} q^{*}$
Considering the iteration (11) and Theorem 1 in ~\cite{jaakkola1994convergence}, then we have $Q^{o}_{k}\xrightarrow{w.p.1} q^{*}$.
\end{proof}
Based on Theorem 3 in \cite{munos2016safe} and our Theorem 4, if $\pi_{k}$ is greedy with respect to $Q_{k}$, then $Q_{k}$ in Algorithm 1 can converge to $q^{*}$ with probability one.
\\ \textbf{Remark 2} \emph{The conclusion in~\cite{jaakkola1994convergence} similar to our Theorem 4, but the update is different from ours and we further develop it under the Assumption 2}.

\section{Experiments}

\subsection{Experiment for Prediction Capability}

In this section, we test the prediction abilities of $Q(\sigma ,\lambda)$ in \emph{19-state random walk} environment which is a one-dimension MDP environment that widely used in reinforcement learning~\cite{sutton2017reinforcement,de2018multi}. The agent at each state has two action : \emph{left} and \emph{right}, and taking each action with equal probability.

We compare the root-mean-square(RMS) error as a function of episodes, $\sigma$ varies dynamically $\sigma$ from 0 to 1 with steps of 0.2. Results in Figure 1 show that the performance of $Q(\sigma ,\lambda)$ increases gradually as the $\sigma$ decreases from 1 to 0, which just verifies the upper error bound in Theorem2. 
\begin{figure}[htbp]
  \centering
   \includegraphics[width=0.23\textwidth]{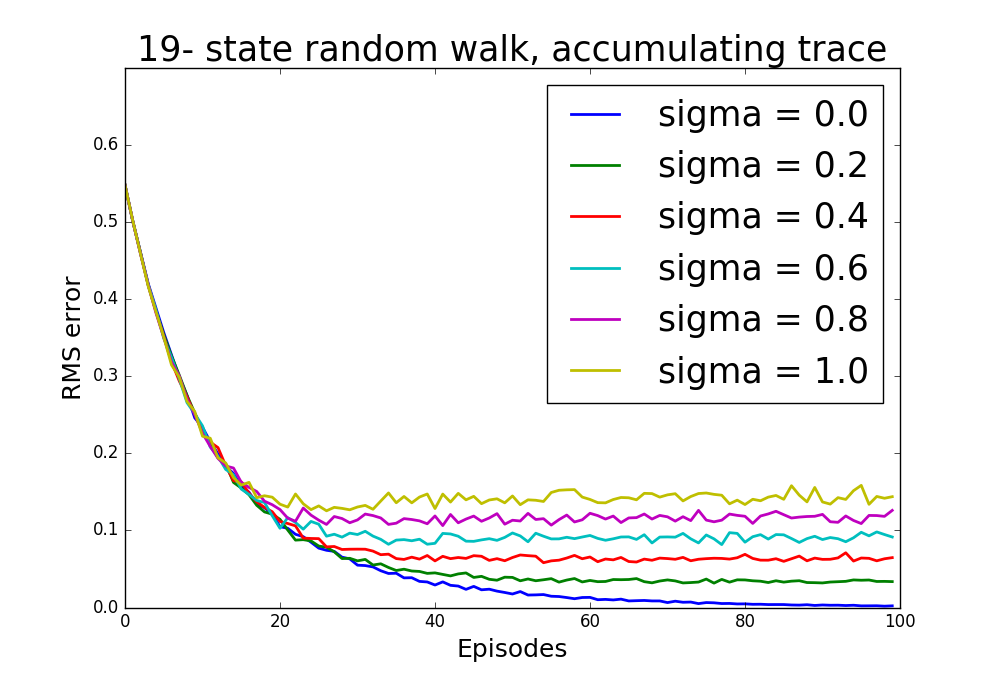}
  \includegraphics[width=0.23\textwidth]{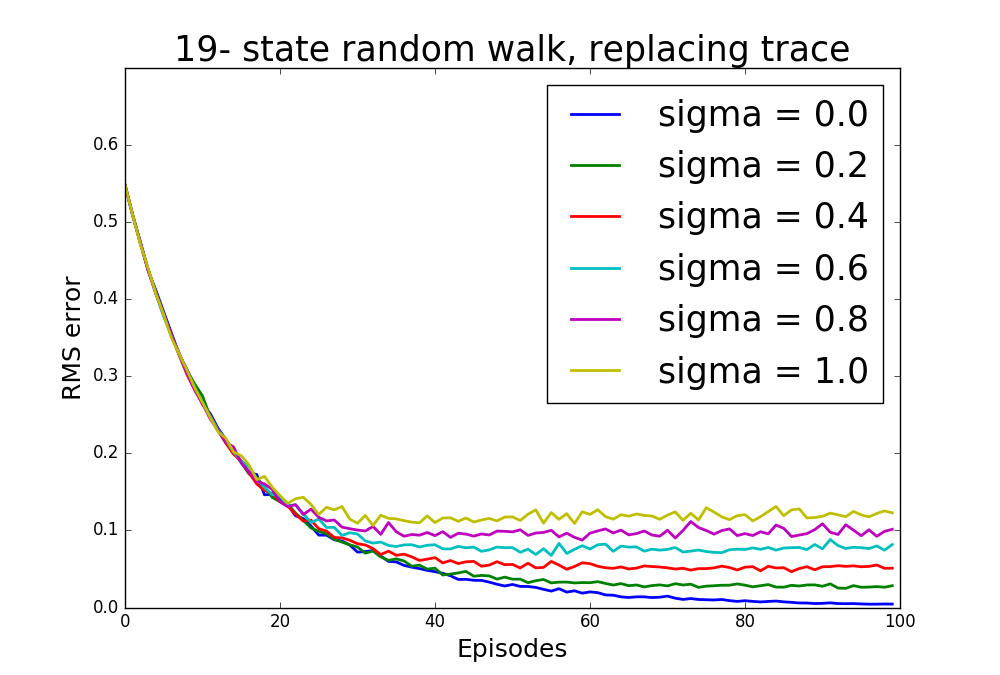}
  \caption{Root-mean-square(RMS) error of state values as a function of episodes in 19-state random walk, we consider both accumulating trace and replacing trace. The plot shows the prediction ability of $Q(\sigma,\lambda)$ dynamically varying $\sigma$, where $\lambda$ is fixed to 0.8,$\gamma=1$.}
\end{figure}

\subsection{Experiment for Control Capability}
We test the control capability of $Q(\sigma,\lambda)$ in the classical episodic task, \emph{mountain car} \cite{sutton1998reinforcement}. Because the state space in this environment is continuous, we use \emph{tile coding} function approximation \cite{sutton1998reinforcement}, and use the version 3 of Sutton's tile coding\footnote{http://incompleteideas.net/rlai.cs.ualberta.ca/RLAI/RLtoolkit/tilecoding.html} software (n.d.) with 8 tilings. 
\begin{figure}[htbp]
  \centering
  \includegraphics[width=0.23\textwidth]{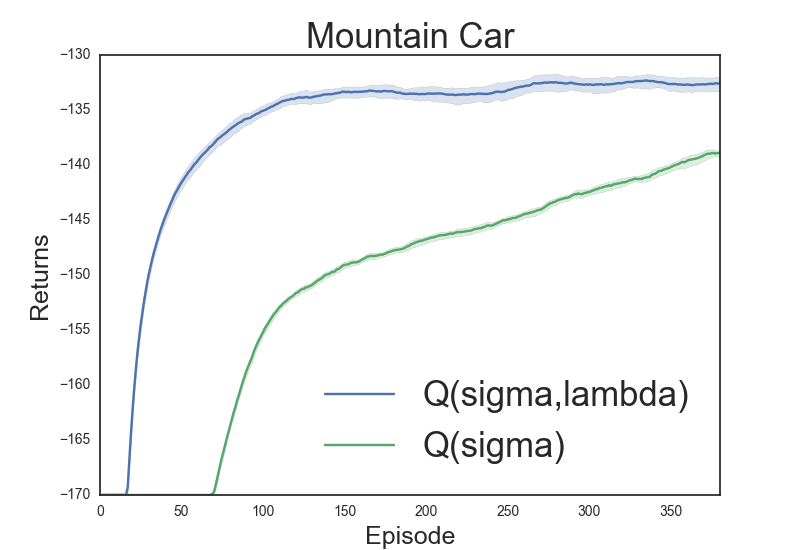}
  \includegraphics[width=0.23\textwidth]{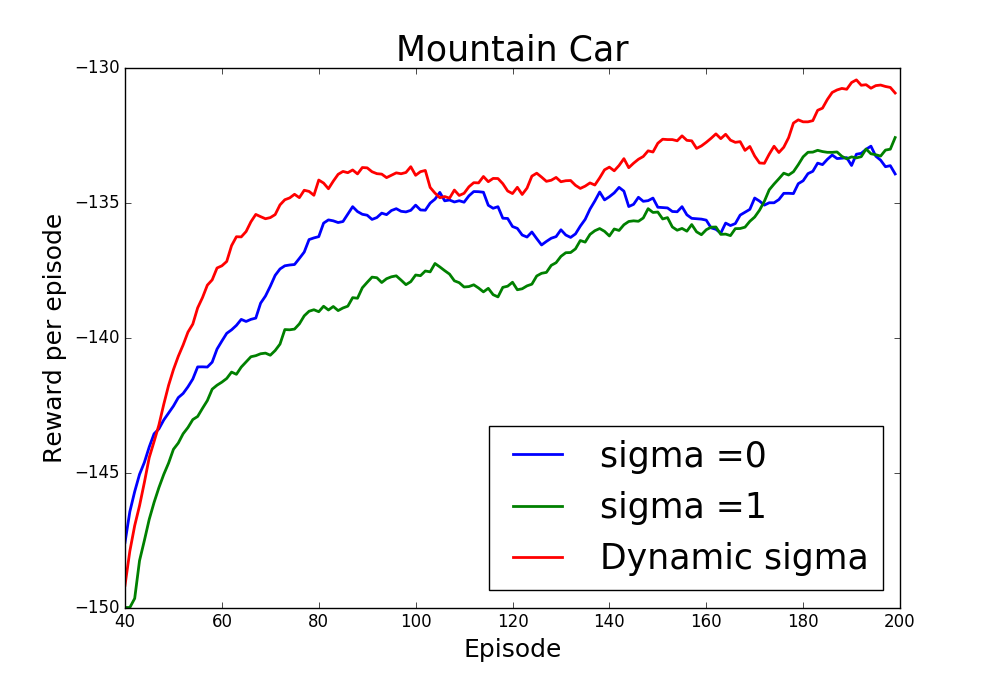}
  \caption{The plot shows the the average return per episode. A right-centered moving average with a window of 20 successive episodes was employed in order to smooth the results. 
  The right plot shows the result of $Q(\sigma,\lambda )$ comparing with $Q(\sigma)$. The left plot shows the result of $Q(\sigma,\lambda )$ comparing with $Q^{\pi}(\sigma)$ ($\sigma=0$) and Sarsa$(\lambda)$ ($\sigma=1$). $\gamma=0.99$, step-size $\alpha=0.3$.
}
\end{figure}

\begin{table}[htb]
 \centering
 \caption{Average Return per Episode After 50 Episodes. 
 }
 
 \label{tab:comparison}
 \begin{tabular}{l|c|c|c|c}
  \hline 
  Algorithm & $\text{Mean}$ & $\text{UB}$ & $\text{LB}$   \\ \hline
  $Q(\sigma=0,\lambda)$,$Q^{\pi}(\lambda)$ & \textbf{-193.56} & -179.33 & -197.06  \\ \hline
  $Q(\sigma=1,\lambda)$,$Sarsa(\lambda)$ & -196.84 & -182.69 & -200.42  \\ \hline
  $Q(\sigma=0.5,\lambda)$ & -195.99 & -181.60 & -199.62 \\ \hline
  Dynamic $\sigma$   & -195.01 & \textbf{-177.14} & \textbf{-195.01}  \\ \hline
   \end{tabular}
\end{table}

In the right part of Figure 2, we collect the data by varing $\sigma$ from $0$ to $1$ with steps of $0.02$. Results show that $Q(\sigma,\lambda)$ significantly converges faster than $Q(\sigma)$. 
In the left part of Figure 2, results show that  the $Q(\sigma,\lambda)$ with $\sigma$ in an intermediate value can outperform $Q^{\pi}(\lambda)$ and Sarsa$(\lambda)$.
Table1 and Table2 summarize the average return after 50 and 200 episodes. In order to gain more insight into the nature of the results, we run $\sigma$ from $0$ to $1$ with steps of $0.02$, we take the statistical method according to \cite{de2018multi}, lower (LB) and upper (UB) 95$\%$ confidence interval bounds are provided to validate the results. The average return after only 50 episodes could be interpreted as a measure of initial performance, whereas the average return after 200 episodes shows how well an algorithm is capable of learning\cite{de2018multi}. Results show that $Q(\sigma,\lambda)$ with a intermediate value  had the best final performance.

\begin{table}[htb]
 \centering
 \caption{Average Return per Episode After 200 Episodes. 
 }
 
 \label{tab:comparison}
 \begin{tabular}{l|c|c|c|c}
  \hline 
  Algorithm & $\text{Mean}$ & $\text{UB}$ & $\text{LB}$   \\ \hline
  $Q(\sigma=0,\lambda)$,$Q^{\pi}(\lambda)$ & -144.04 & -138.49 & -146.44  \\ \hline
  $Q(\sigma=1,\lambda)$,$Sarsa(\lambda)$ & -145.72 & -140.04 & -148.21  \\ \hline
  $Q(\sigma=0.5,\lambda)$ & -143.71 & -137.92 & -146.12 \\ \hline
  Dynamic $\sigma$   & \textbf{-142.62} & \textbf{-137.24} & \textbf{-145.09}  \\ \hline
   \end{tabular}
\end{table}
\section{Conclusion}
In this paper we presented a new method, called Q$(\sigma ,\lambda)$, which unifies Sarsa$(\lambda)$ and $Q^{\pi}(\lambda)$. We solved a upper error bound of $Q(\sigma,\lambda)$ for the ability of policy evaluation. Furthermore, we proved the convergence of Q$(\sigma ,\lambda)$ control algorithm to $q^{*}$ under some conditions. The proposed approach was compared with one-step and multi-step TD learning methods, results demonstrated its effectiveness.

\clearpage
\bibliographystyle{named}
\bibliography{ijcai18}

\end{document}